\documentclass{article}

 \usepackage[main, final]{neurips_2025}
 \usepackage{enumitem}
 \usepackage{graphicx}
\usepackage{subcaption}
\usepackage{booktabs}
\usepackage{amsmath, amssymb, amsthm}
\usepackage{xcolor}

\usepackage[utf8]{inputenc} 
\usepackage[T1]{fontenc}    
\usepackage{hyperref}       
\usepackage{url}            
\usepackage{booktabs}       
\usepackage{amsfonts}       
\usepackage{nicefrac}       
\usepackage{microtype}      
\usepackage{xcolor}         
\usepackage{amsthm}
\definecolor{coral}{RGB}{255, 127, 80}
\newtheorem{theorem}{Theorem}

\title{When to restart? Exploring escalating restarts on convergence}

\author{
Ayush K. Varshney$^{1,2}$ \;
Šarūnas Girdzijauskas$^{2,3}$ \;
Konstantinos Vandikas$^{1}$ \;
Aneta Vulgarakis Feljan$^{1}$ \\
\\
$^{1}$ Ericsson Research, Ericsson AB, Sweden \\
$^{2}$ KTH Royal Institute of Technology, Stockholm, Sweden \\
$^{3}$ RISE Research Institutes of Sweden, Stockholm, Sweden 
\\
\texttt{\{ayush.kumar.varshney, konstantinos.vandikas, aneta.vulgarakis\}@ericsson.com} \\
\texttt{sarunasg@kth.se}
}

\begin{document}

\maketitle

\begin{abstract}
  Learning rate scheduling plays a critical role in the optimization of deep neural networks, directly influencing convergence speed, stability, and generalization. While existing schedulers such as cosine annealing, cyclical learning rates, and warm restarts have shown promise, they often rely on fixed or periodic triggers that are agnostic to the training dynamics, such as stagnation or convergence behavior. In this work, we propose a simple yet effective strategy, which we call Stochastic Gradient Descent with Escalating Restarts (SGD-ER). It adaptively increases the learning rate upon convergence. Our method monitors training progress and triggers restarts when stagnation is detected, linearly escalating the learning rate to escape sharp local minima and explore flatter regions of the loss landscape. We evaluate SGD-ER across CIFAR-10, CIFAR-100, and TinyImageNet on a range of architectures including ResNet-18/34/50, VGG-16, and DenseNet-101. Compared to standard schedulers, SGD-ER improves test accuracy by 0.5–4.5\%, demonstrating the benefit of convergence-aware escalating restarts for better local optima.
\end{abstract}

\section{Introduction}

Deep learning has transformed the landscape of artificial intelligence, driving breakthroughs in image recognition [1], natural language understanding [2], and decision-making systems [3]. While architectural innovations have played a key role in these advances, the success of deep neural networks also hinges on effective optimization strategies. Among the many factors influencing optimization, the learning rate (LR) remains one of the most critical hyperparameters. It governs the pace at which model parameters are updated during training, and its value directly affects convergence speed, stability, and generalization. A learning rate that is too high may cause divergence, while one that is too low can lead to slow convergence and suboptimal solutions.

To address this, learning rate schedulers, that adjust the LR over time, have become central to efficient training. These schedulers are particularly important in navigating the complex geometry of high-dimensional, non-convex loss landscapes, which are not only filled with saddle points, flat regions, and numerous local minima, but are also highly rugged. Adjusting the learning rate can be viewed as controlling the level of smoothness - larger rates effectively ‘zoom out’ and smooth over small bumps, enabling the optimizer to traverse the surface more efficiently toward better optima. Traditional schedulers typically decay the learning rate monotonically, using heuristics such as exponential or linear decay. While effective in some settings, these approaches often struggle to escape sharp minima or saddle points, and are tightly coupled to a fixed training budget.

Recent work has explored more dynamic scheduling strategies that periodically increase the learning rate to promote exploration. Cosine annealing with warm restarts [4] (SGDR) is one such method, where the learning rate follows a cosine decay and is periodically reset to a higher value. This mechanism encourages the optimizer to escape narrow minima and explore flatter regions of the loss surface. Building on this idea, cyclical learning rate (CLR) schedules proposed by Smith [5] vary the learning rate smoothly between predefined bounds without explicit restarts, effectively unifying oscillatory and restart-based behavior. More recent variants, such as Cyclical Log Annealing [6], modulate the learning rate using logarithmic functions, introducing sharper restarts and adaptive cycle amplitudes.

Warmup-Stable-Decay (WSD) scheduler [7] (Wen et al., 2024) offers a different perspective. WSD introduces a three-phase scheduler: a warm-up phase with gradually increasing LR, a stable phase with a constant high LR that allows indefinite training, and a final decay phase that sharpens convergence. This design decouples the schedule from a fixed compute budget and is motivated by a theoretical view of the loss landscape as a river valley, where progress along flat directions is made during the stable phase, and convergence in sharper directions is achieved during decay.

Despite the promise of these approaches, most current learning rate schedulers - whether cyclic, cosine-based, or with restarts rely on predefined or periodic increases in LR. These restarts are often agnostic to the actual optimization dynamics and can lead to unstable training or inefficient exploration. We argue that restarts should be adaptive: triggered by convergence rather than fixed schedules. Specifically, once the model reaches a plateau in training loss, restarting with a larger learning rate can help escape the current local minima and explore alternative regions of the loss landscape. Given the rugged, multi-modal nature of neural loss surfaces, such adaptive restarts may lead to better optima. 
\begin{figure}[t]
  \centering
  \begin{subfigure}[t]{0.50\linewidth}
    \centering
    \includegraphics[width=\linewidth]{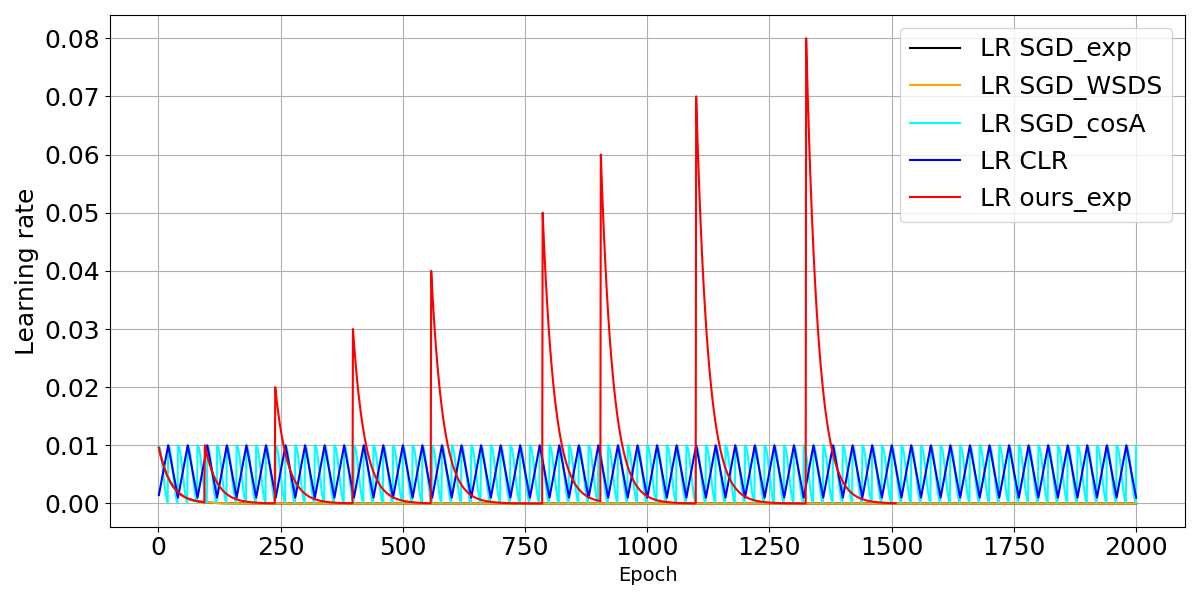}
    \caption{Learning rate schedulers}
    \label{fig:LR_scheduler}
  \end{subfigure}\hfill
  \begin{subfigure}[t]{0.50\linewidth}
    \centering
    \includegraphics[width=\linewidth]{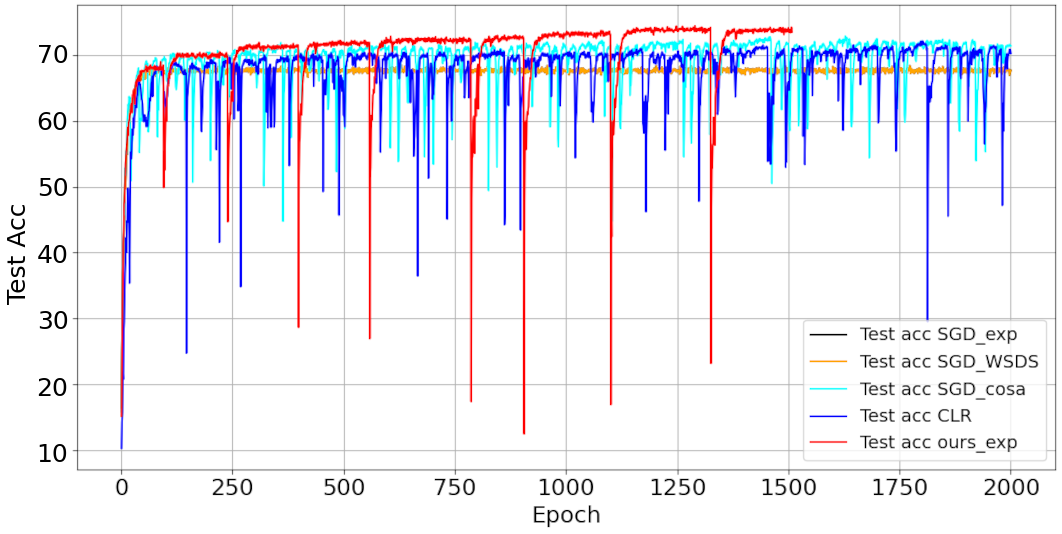}
    \caption{Test accuracy on CIFAR-100 dataset.}
    \label{fig:test_acc_res18_cifar100}
  \end{subfigure}
  \caption{\small Comparison of learning rate schedulers and test accuracy on CIFAR-100 with ResNet-18 architecture with training budget of 2000 epochs. (Left) Learning rate trajectories for our proposed scheduler (ours\_exp, in \textcolor{red}{red}) alongside four baselines: SGD with exponential decay, SGD with Warmup-Stable-Decay-Simplified (SGD\_WSDS), SGD with Cyclical Learning Rate (SGD\_CLR), and SGD with Cosine Annealing and Warm Restarts (SGD\_cosA). (Right) Comparison of test accuracy on CIFAR-100 under identical training budgets, results highlight that our approach converges to a better local optima and terminates early when further improvements are not found.}
  \label{fig:intro_resNet18_CIFAR100}
\end{figure}

In this paper, we propose a simple yet effective strategy called Stochastic Gradient Descent with Escalating Restarts (SGD-ER) upon convergence. Our method detects convergence during training with some predefined patience threshold, and restarts the optimizer with a linearly increased learning rate. Each time convergence is detected, the learning rate is increased by a fixed increment, allowing the optimizer to take larger steps and escape local minima. Training continues until no further improvement is observed or a predefined budget is reached. Figure \ref{fig:LR_scheduler} shows an example of such a scheduler with exponential decay compared with its counterparts. Here, we set a patience parameter of 50, meaning that if no improvement in validation loss is observed for 50 consecutive epochs, a restart is triggered and the learning rate is linearly escalated. Figure \ref{fig:test_acc_res18_cifar100} shows the comparison of test accuracies, the loss curves shows that our approach achieves better final accuracy even when other approaches were trained for $2000$ epochs\footnote{The results with linear decay and Adam are given in appendix.}.  Figure \ref{fig:test_acc_res18_cifar100} also shows that increasing the learning rate can temporarily degrade model performance; however, the model quickly recovers, and overall accuracy improves. This behavior aligns with observations in [5], where short-term instability leads to better long-term convergence. 

Our experiments across standard image benchmarks show consistent accuracy gains over existing schedulers, indicating that dynamically escalating restarts can yield more effective optimization trajectories.

\section{Stochastic Gradient Descent with Escalating Restarts}


We introduce Stochastic Gradient Descent with Escalating Restarts (SGD-ER), a learning rate scheduler that adaptively increases the learning rate upon convergence. Unlike fixed or periodic restart schedules (e.g., cosine annealing with warm restarts, CLR), SGD-ER triggers restarts based on optimization signals, enabling controlled exploration when progress stalls and linearly escalates the learning rate across restarts to escape sharp local minima. 

In this work, we define convergence using a plateau‑based criterion: if the validation loss fails to exhibit a meaningful decrease within a predefined patience window, we treat this stagnation as an indicator that the optimizer has reached a region of limited local improvement. This criterion is consistent with early‑stopping practices and provides a practical way to detect when the current learning rate is no longer facilitating effective progress.

Let $\eta_0$ be the initial learning rate and $\mathcal{C}$ a convergence criterion (e.g., stagnation in validation loss or gradient norm). SGD-ER proceeds as follows:
\begin{itemize}[leftmargin=2.5em]
    \item Initialize model parameters $\theta_0$ and learning rate $\eta = \eta_0$.
    \item Train using SGD until $\mathcal{C}$ is satisfied.
    \item Upon convergence, restart the optimizer with learning rate $\eta_k = (k+1)\cdot\eta_0$, where $k$ is the number of restarts.
    \item Continue training until either:
    \begin{itemize}[label=\textbullet]
        \item the current post-restart loss does not improve over the best loss from previous restarts, or
        \item the maximum number of epochs is reached.
    \end{itemize}
\end{itemize}

Each restart retains model parameters while increasing the learning rate, enabling controlled exploration of new regions in the loss landscape. The escalation is halted either when subsequent restarts fail to yield improved optima or when a predefined exploration limit is reached, thereby preventing unnecessary divergence.

\begin{theorem}
Let $f:\mathbb{R}^d \to \mathbb{R}$ be an $L$-smooth function. Let $\theta^\star$ be a strict saddle point where $\nabla f(\theta^\star) = 0$ and $\lambda_{\min}(\nabla^2 f(\theta^\star)) = -\gamma < 0$. Let $\theta_0^{(k)}$ be the starting point for restart $k$. Assume there exists a non-zero residual offset $x_0 = \langle \theta_0^{(k)} - \theta^\star, v_- \rangle \neq 0$ projected onto the unstable eigenvector $v_-$. 

During restart $k$, the SGD-ER update is:
\[ \theta_{t+1} = \theta_t - \eta_k \nabla f(\theta_t), \quad \eta_k = (k+1)\eta_0. \]
Then, for any neighborhood radius $\delta > |x_0|$, the number of iterations $T_k$ required to escape the neighborhood $\mathcal{N}_\delta = \{ \theta : \|\theta - \theta^\star\| \le \delta \}$ satisfies:
\[ T_k \ge \frac{\ln(\delta / |x_0|)}{\ln(1 + \eta_k \gamma)}, \]
and $T_k \to 0$ as $k \to \infty$. \textbf{Proof} is given in Appendix \ref{theo_analysis}.
\end{theorem}

\begin{table}[]
  \centering
  \setlength{\tabcolsep}{4pt} 
  \caption{Test accuracy (in \%) results on CIFAR-10, CIFAR-100, and TinyImageNet with ResNet-18. Higher is better.}
  \label{tab:main-results}
  \begin{tabular}{lrrrrrrrr}
    \toprule
    Dataset & SGD\_exp & SGD\_lin & Adam & CosA & CLR & WSDS & Ours\_exp & Ours\_lin \\
    \midrule
    CIFAR-10      & 90.86 & 91.93 & 91.34 & 92.59 & 92.15 & 93.05 & \textbf{93.83} & \textbf{93.83} \\
    CIFAR-100     & 68.30 & 71.00 & 67.94 & 71.63 & 70.44 & 72.39 & \textbf{74.30} & \textbf{74.30} \\
    TinyImageNet  & 59.09    & 58.35 & 	54.53 & 	59.46 & 	57.53 & 	59.28 & \textbf{59.71} & \textbf{60.79} \\
    \bottomrule
  \end{tabular}
\end{table}

\begin{table}[]
  \centering
  \small
  \setlength{\tabcolsep}{4pt} 
  \caption{Test accuracy (in \%) for CIFAR-100 after 2000 epochs with ResNet-18. Higher is better.}
  \label{tab:CIFAR100-2000}
  \begin{tabular}{rrrrrrrr}
    \toprule
    SGD\_exp & SGD\_lin & Adam & CosA & CLR & WSDS & Ours\_exp & Ours\_lin \\
    \midrule
    68.53 &	62.17 &	71.27 &	72.84 &	72.10 &	73.59 & \textbf{74.41} & \textbf{74.41} \\
    \bottomrule
  \end{tabular}
\end{table}

\begin{figure}[]
  \centering
  \begin{subfigure}[t]{0.48\linewidth}
    \centering
    \includegraphics[width=\linewidth]{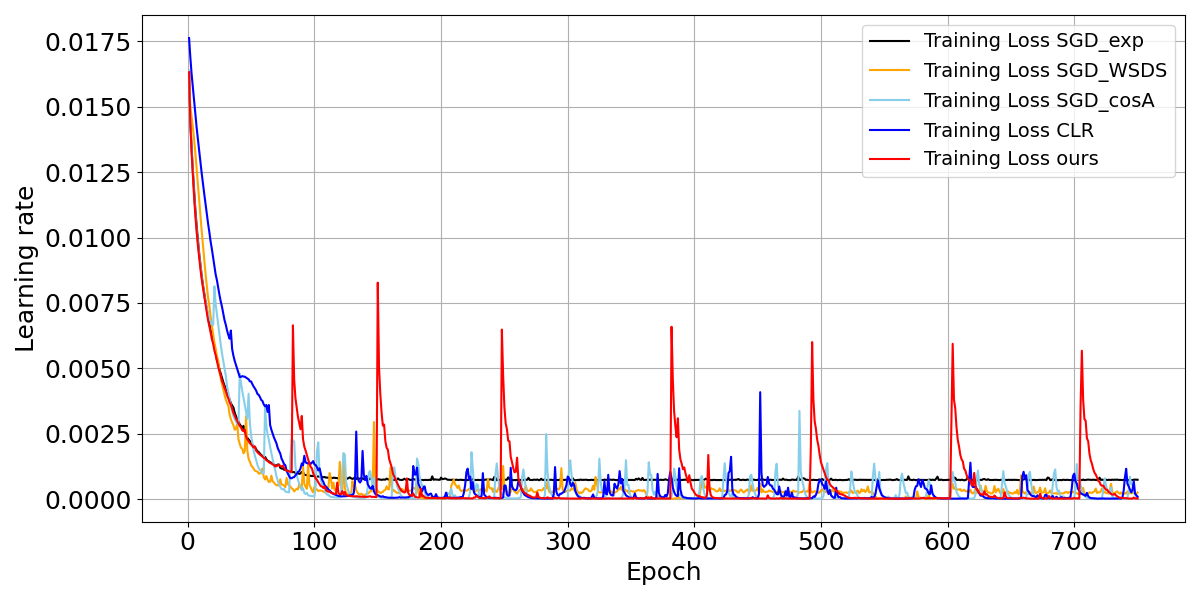}
    \caption{Comparison of training loss.}
    \label{fig:train_loss_densenet121}
  \end{subfigure}\hfill
  \begin{subfigure}[t]{0.48\linewidth}
    \centering
    \includegraphics[width=\linewidth]{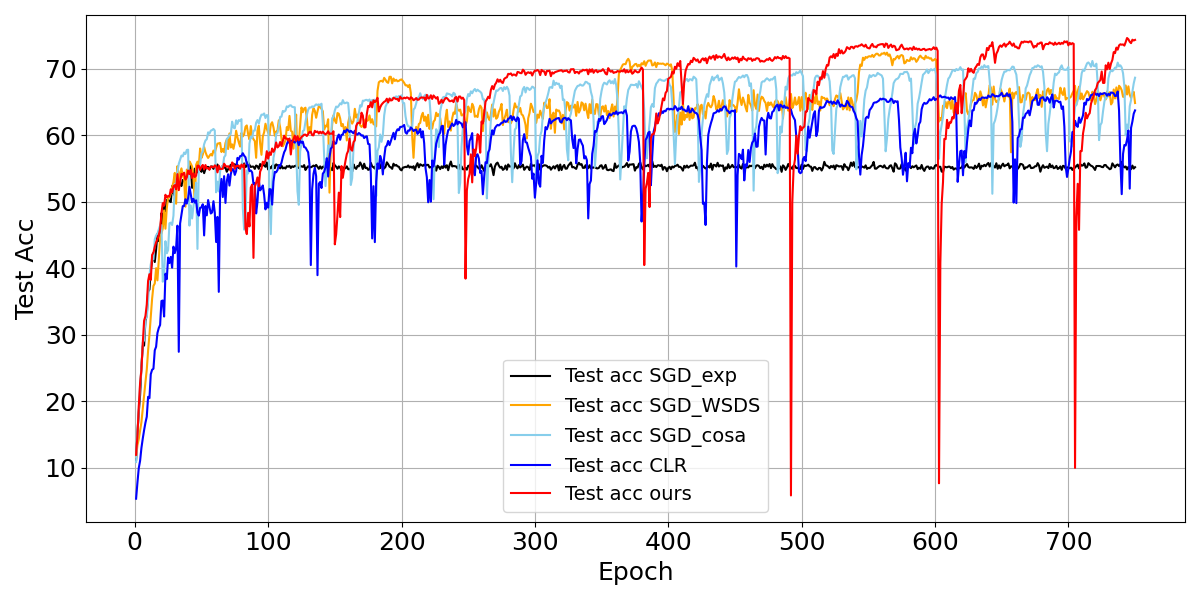}
    \caption{Test accuracy on CIFAR-100 dataset.}
    \label{fig:test_ac_densenet121}
  \end{subfigure}
  \caption{\small (Left) Training loss trajectories for our proposed scheduler (ours\_exp, in \textcolor{red}{red}) alongside four baselines: SGD with exponential decay, SGD with Warmup-Stable-Decay-Simplified (SGD\_WSDS), SGD with Cyclical Learning Rate (SGD\_CLR), and SGD with Cosine Annealing and Warm Restarts (SGD\_cosA). (Right) Test accuracy on CIFAR-100 under a training budget of 750 epochs; our approach (in \textcolor{red}{red}) finds better local optima.}
  \label{fig:densenet121_CIFAR100}
\end{figure}
\section{Experimental Analysis}

We conduct extensive experiments to evaluate the effectiveness of SGD-ER across multiple datasets and architectures. Specifically, we use CIFAR-10 [8], CIFAR-100 [8], and TinyImageNet [9], and test on a range of convolutional architectures including ResNet-18 [10], ResNet-34 [10], ResNet-50 [10], VGG-16 [11], and DenseNet-101 [12]. Our method is compared against several widely adopted baselines: SGD with linear decay, exponential decay, cyclical learning rate (CLR), cosine annealing (CosA), Adam, and SGD with Warmup-Stable-Decay (SGD-WSD).

\begin{table}[]
\centering
\setlength{\tabcolsep}{4pt} 
  \caption{Comparison of different learning rate schedulers on CIFAR-100 with ResNet-18 architecture. Values are mean ± std of best utility scores across 3 different seeds.}
  \label{tab:lr_strategies}
\begin{tabular}{lcccc}
\toprule
Approach & Train Loss & Val Loss & Test Loss & Test Acc (\%) \\
\midrule
CLR       & \textbf{1.60e-05 (±5.82e-07)} & 0.00488 (±6.88e-05) & 0.00496 (±3.10e-05) & 70.65 (±0.19) \\
SGD\_exp  & 1.11e-04 (±5.71e-06) & 0.00519 (±1.58e-04) & 0.00525 (±9.74e-05) & 68.49 (±0.17) \\
SGD\_lin  & 2.80e-05 (±6.74e-07) & 0.00486 (±1.03e-04) & 0.00490 (±6.29e-05) & 71.08 (±0.08) \\
Adam      & 1.58e-04 (±1.22e-04) & 0.00588 (±1.44e-04) & 0.00590 (±2.02e-04) & 64.04 (±3.38) \\
CosA      & 1.75e-05 (±1.63e-06) & 0.00466 (±1.79e-04) & 0.00472 (±1.27e-04) & 72.05 (±0.43) \\
WSDS      & 1.64e-05 (±1.23e-06) & 0.00462 (±7.64e-05) & 0.00465 (±8.62e-05) & 72.83 (±0.44) \\
Ours\_exp & 2.40e-05 (±1.19e-06) & \textbf{0.00434 (±1.74e-04)} & \textbf{0.00443 (±1.63e-04)} & \textbf{73.62 (±0.77)} \\
Ours\_lin & 2.16e-05 (±1.54e-07) & \textbf{0.00427 (±1.05e-04)} & \textbf{0.00435 (±7.51e-05)} & \textbf{74.61 (±0.35)} \\
\bottomrule
\end{tabular}
\end{table}

\begin{table}[h]
  \centering
  \setlength{\tabcolsep}{3.5pt}
  \caption{Test accuracy (in \%) results on CIFAR-10 and CIFAR-100 across architectures with exponential decay.}
  \label{tab:cifar100-cifar10-merged}
  \begin{tabular}{lrrrrr rrrrr}
    \toprule
    & \multicolumn{5}{c}{CIFAR-10} & \multicolumn{5}{c}{CIFAR-100} \\
    \cmidrule(lr){2-6} \cmidrule(lr){7-11}
    Architecture & SGD\_exp & CosA & CLR & WSDS & Ours & SGD\_exp & CosA & CLR & WSDS & Ours \\
    \midrule
    ResNet34    & 90.81 & 92.73 & 92.38 & 92.88 & \textbf{93.82} & 67.75 & 72.17 & 71.04 & 72.36 & \textbf{74.24} \\
    ResNet50    & 90.16 & 92.77 & 91.77 & 93.13 & \textbf{94.65} & 65.52 & 72.10 & 70.25 & 73.76 & \textbf{76.77} \\
    VGG16       & 90.27 & 91.42 & 90.88 & 91.44 & \textbf{92.02} & 65.17 & 67.35 & 67.23 & 68.08 & \textbf{68.56} \\
    DenseNet121 & 86.31 & 91.81 & 88.99 & 92.77 & \textbf{94.19} & 56.10 & 71.20 & 66.61 & 72.45 & \textbf{76.76} \\
    \bottomrule
  \end{tabular}
\end{table}

\begin{table}[h]
  \centering
  \setlength{\tabcolsep}{3.5pt}
  \caption{Test accuracy (in \%) results on TinyImageNet across ResNet architectures with exponential decay.}
  \label{tab:tinyimg-res-merged}
  \begin{tabular}{lrrrrr rrrrr}
    \toprule
    & \multicolumn{5}{c}{ResNet-34} & \multicolumn{5}{c}{ResNet-50} \\
    \cmidrule(lr){2-6} \cmidrule(lr){7-11}
    Dataset & SGD\_exp & CosA & CLR & WSDS & Ours & SGD\_exp & CosA & CLR & WSDS & Ours \\
    \midrule
    TinyImageNet & 58.08 &	61.12 &	60.30 &	62.30 &	\textbf{64.53} & 58.43 &	62.50 &	60.62 &	64.52 &	\textbf{64.93} \\
    \bottomrule
  \end{tabular}
\end{table}

For all SGD-based optimizers, the initial learning rate is set to 0.01, while Adam uses 0.001. The training budget is set to 750 epochs for CIFAR-100, and 500 epochs for CIFAR-10 and TinyImageNet. Additionally, we perform a long-run experiment with 2000 epochs on CIFAR-100 to assess long-term convergence behavior. We set patience of 50 epochs for CIFAR-100 and 30 epochs for CIFAR-10 dataset. All experiments are conducted using fixed random seeds for reproducibility. For CIFAR-100, we also show results averaged (along with standard deviations) over three different random seeds to prove the validy of SGD-ER across different runs.

Table \ref{tab:main-results} presents test accuracy comparisons on CIFAR-10, CIFAR-100, and TinyImageNet using ResNet-18. Our method (SGD-ER) with both exponential decay (Ours\_exp) and linear decay (Ours\_lin), consistently outperforms all baselines, including state of the art schedulers such as CLR, cosine annealing and WSDS. Additional utility metrics i.e., training loss, validation loss, and test loss, for these datasets are provided in the appendix. 

Table \ref{tab:CIFAR100-2000} shows test accuracy comparisons after 2000 epochs on CIFAR-100. SGD-ER continues to outperform all baselines, demonstrating superior long-term convergence. Table \ref{tab:lr_strategies} further compares utility metrics on CIFAR-100, showing that SGD-ER achieves lower validation and test loss with reduced variance. While CLR and CosA achieve lower training loss, they exhibit higher validation and test loss, indicating overfitting.

Table \ref{tab:cifar100-cifar10-merged} reports test accuracy across architectures for CIFAR-10 and CIFAR-100 using exponential decay. Table \ref{tab:tinyimg-res-merged} presents results on TinyImageNet with ResNet-34 and ResNet-50. In all cases, SGD-ER achieves the highest test accuracy, confirming its robustness and generalization across datasets and architectures.

\section{Conclusion}

We introduced Stochastic Gradient Descent with Escalating Restarts (SGD-ER), a simple yet effective learning rate scheduling strategy that adaptively increases the learning rate once the model reaches a plateau. Unlike fixed or periodic restart-based methods, SGD-ER leverages stagnation in the validation loss to trigger restarts and linearly escalates the learning rate to escape sharp local minima and explore flatter regions of the loss landscape. Extensive experiments across CIFAR-10, CIFAR-100, and TinyImageNet, using a variety of architectures, demonstrate consistent improvements in test accuracy ranging from 0.5\% to 4.5\% over its existing counterparts. These results highlight the promise of convergence-aware restarts as a lightweight mechanism for improving optimization and generalization. Future work will address the transient accuracy drops observed after learning-rate restarts by exploring smoother escalation schemes and adaptive restart thresholds.

\begin{ack}
This work was partially supported by the Wallenberg Al, Autonomous Systems and Software Program (WASP) funded by the Knut and Alice Wallenberg Foundation.
\end{ack}

\section*{References}

{
\small
[1] Huang, K.W., Lin, C.C., Lee, Y.M. and Wu, Z.X., 2019. A deep learning and image recognition system for image recognition. Data Science and Pattern Recognition, 3(2), pp.1-11.

[2] Ofori-Boateng, R., Aceves-Martins, M., Wiratunga, N. and Moreno-Garcia, C.F., 2024. Towards the automation of systematic reviews using natural language processing, machine learning, and deep learning: a comprehensive review. Artificial intelligence review, 57(8), p.200.

[3] Mehra, A., 2024. Hybrid AI models: Integrating symbolic reasoning with deep learning for complex decision-making. Journal of Emerging Technologies and Innovative Research, 11(8), pp.f693-f695.

[4] Loshchilov, I. and Hutter, F., 2017, February. SGDR: Stochastic Gradient Descent with Warm Restarts. In International Conference on Learning Representations.

[5] Smith, L.N., 2017, March. Cyclical learning rates for training neural networks. In 2017 IEEE winter conference on applications of computer vision (WACV) (pp. 464-472). IEEE.

[6] Naveen, P., 2024. Cyclical log annealing as a learning rate scheduler. arXiv preprint arXiv:2403.14685.

[7] Wen, K., Li, Z., Wang, J.S., Hall, D.L.W., Liang, P. and Ma, T., Understanding Warmup-Stable-Decay Learning Rates: A River Valley Loss Landscape View. In The Thirteenth International Conference on Learning Representations.

[8] Krizhevsky, A. and Hinton, G., 2009. Learning multiple layers of features from tiny images.

[9] Le, Y. and Yang, X., 2015. Tiny imagenet visual recognition challenge. CS 231N, 7(7), p.3.

[10] He, K., Zhang, X., Ren, S. and Sun, J., 2016. Deep residual learning for image recognition. In Proceedings of the IEEE conference on computer vision and pattern recognition (pp. 770-778).

[11] Simonyan, K. and Zisserman, A., 2015, April. Very deep convolutional networks for large-scale image recognition. In 3rd International Conference on Learning Representations (ICLR 2015). Computational and Biological Learning Society.

[12] Huang, G., Liu, Z., Van Der Maaten, L. and Weinberger, K.Q., 2017. Densely connected convolutional networks. In Proceedings of the IEEE conference on computer vision and pattern recognition (pp. 4700-4708).
}


\appendix

\section{Additional Experimental Results}

This section presents additional experimental results. Figure~\ref{fig:intro_resNet18_CIFAR100_all} illustrates how existing learning rate schedulers adjust the learning rate over time, while our proposed schedulers adaptively increase the learning rate linearly after each convergence is observed. In \textcolor{red}{red}, we show exponential decay, and in \textcolor{coral}{coral}, linear decay after each restarts. As seen in Figure~\ref{fig:test_acc_res18_cifar100_all}, our approaches with both exponential and linear decay consistently outperform existing methods. Figure~\ref{fig:train_loss_resnet18_densenet121_500} depicts the learning rate curves, and Figure~\ref{fig:test_ac_resnet18_densenet121_500} demonstrates the improved test accuracy achieved by our approach.


Table~\ref{tab:test-losses-tinyImageNet} reports test, validation, and training losses for CIFAR-10, CIFAR-100, and TinyImageNet using the ResNet-18 architecture. Our approach achieves the lowest test and validation losses across all datasets. In contrast, Cyclic Learning Rate (CLR) obtains the best training losses but exhibits higher test and validation losses, indicating potential overfitting. This trend is also evident in Table~\ref{tab:CIFAR100-test-losses-2000} and Table~\ref{tab:tinyimg-test-losses}, which present losses for CIFAR-100 with ResNet-18 and TinyImageNet with ResNet-34/50 respectively. Again, our methods achieve the best test and validation losses, while CLR performs better on training loss, suggesting potential overfitting.



Finally, Table~\ref{tab:cifar-losses-merged} shows the test, validation, and training losses for CIFAR-10 and CIFAR-100 across multiple architectures (ResNet-34/50, VGG16, DenseNet121) using exponential decay. Our approaches consistently deliver the best test and validation losses, whereas CLR and WSDS often achieve lower training losses, further suggesting overfitting across diverse architectures.

Figure \ref{fig:extended_test_accuracies_some} shows an interesting comparison where SGD-ER approaches keeps on improving the test accuracies however the other approaches converges with the increased number of training epochs.

\begin{figure}[]
  \centering
  \begin{subfigure}[t]{0.50\linewidth}
    \centering
    \includegraphics[width=\linewidth]{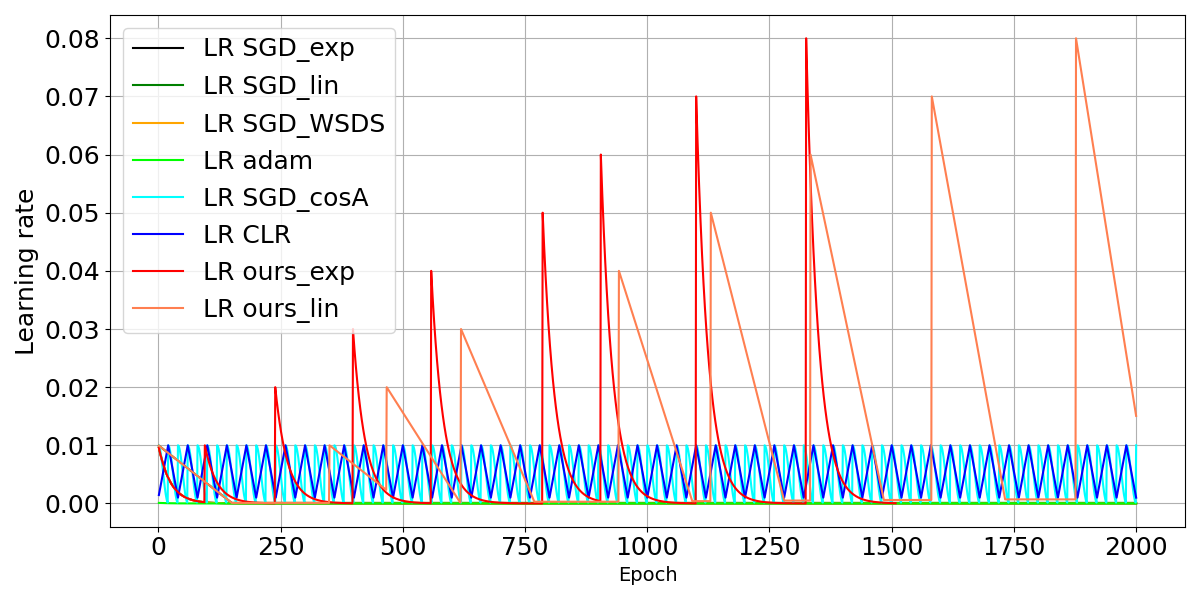}
    \caption{Learning rate schedulers}
    \label{fig:LR_scheduler_all}
  \end{subfigure}\hfill
  \begin{subfigure}[t]{0.50\linewidth}
    \centering
    \includegraphics[width=\linewidth]{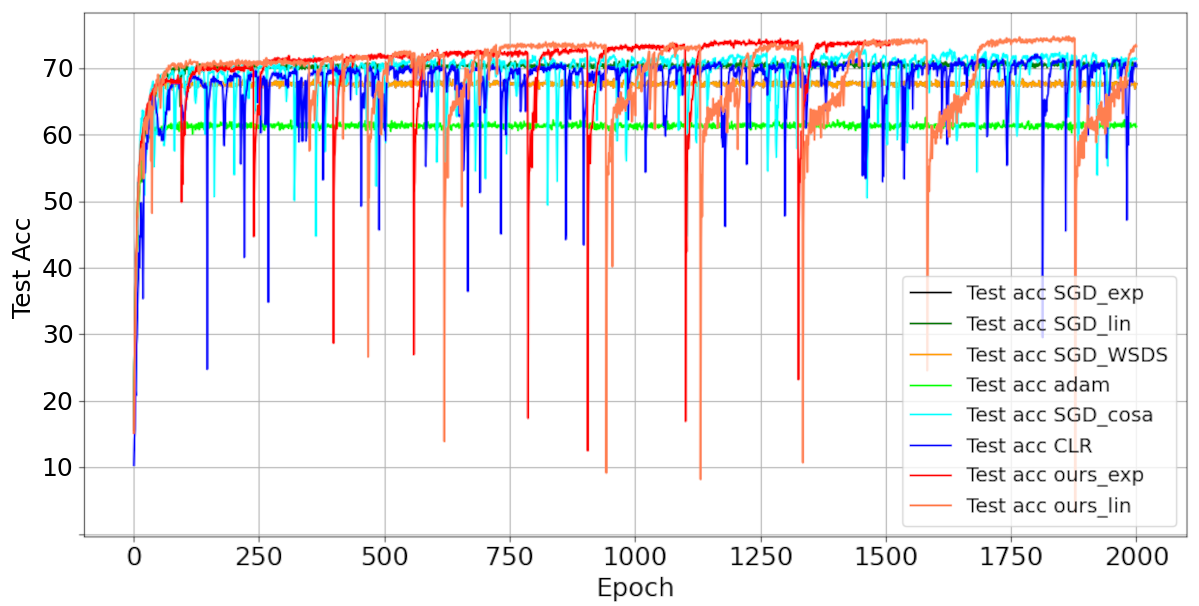}
    \caption{Test accuracy on CIFAR-100 dataset.}
    \label{fig:test_acc_res18_cifar100_all}
  \end{subfigure}
  \caption{\small Comparison of learning rate schedulers and test accuracy on CIFAR-100 with ResNet-18 architecture with training budget of 2000 epochs. (Left) Learning rate trajectories for our proposed scheduler (ours\_exp - with exponential decay, ours\_lin - with linear decay) alongside these baselines: SGD with exponential decay, SGD with linear decay (SGD\_lin), SGD with Warmup-Stable-Decay-Simplified (SGD\_WSDS), Adam, SGD with Cyclical Learning Rate (SGD\_CLR), and SGD with Cosine Annealing and Warm Restarts (SGD\_cosA). (Right) Comparison of test accuracy on CIFAR-100 under identical training budgets, results highlight that our approaches converges to a better solutions.}
  \label{fig:intro_resNet18_CIFAR100_all}
\end{figure}

\begin{figure}[]
  \centering
  \begin{subfigure}[t]{0.48\linewidth}
    \centering
    \includegraphics[width=\linewidth]{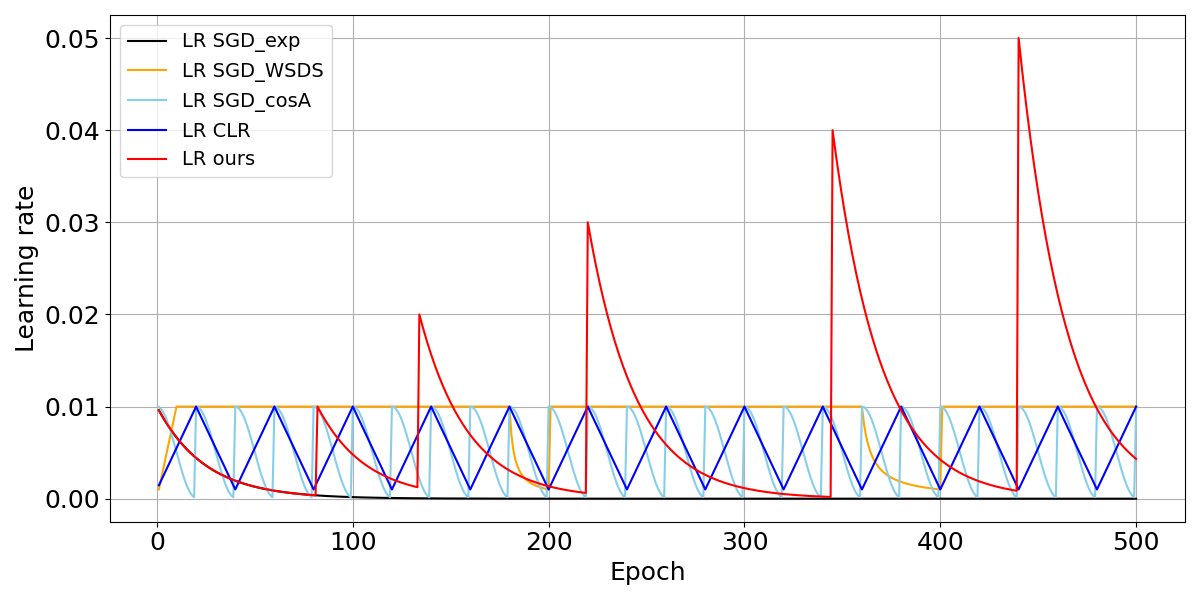}
    \caption{Comparison of training loss.}
    \label{fig:train_loss_resnet18_densenet121_500}
  \end{subfigure}\hfill
  \begin{subfigure}[t]{0.48\linewidth}
    \centering
    \includegraphics[width=\linewidth]{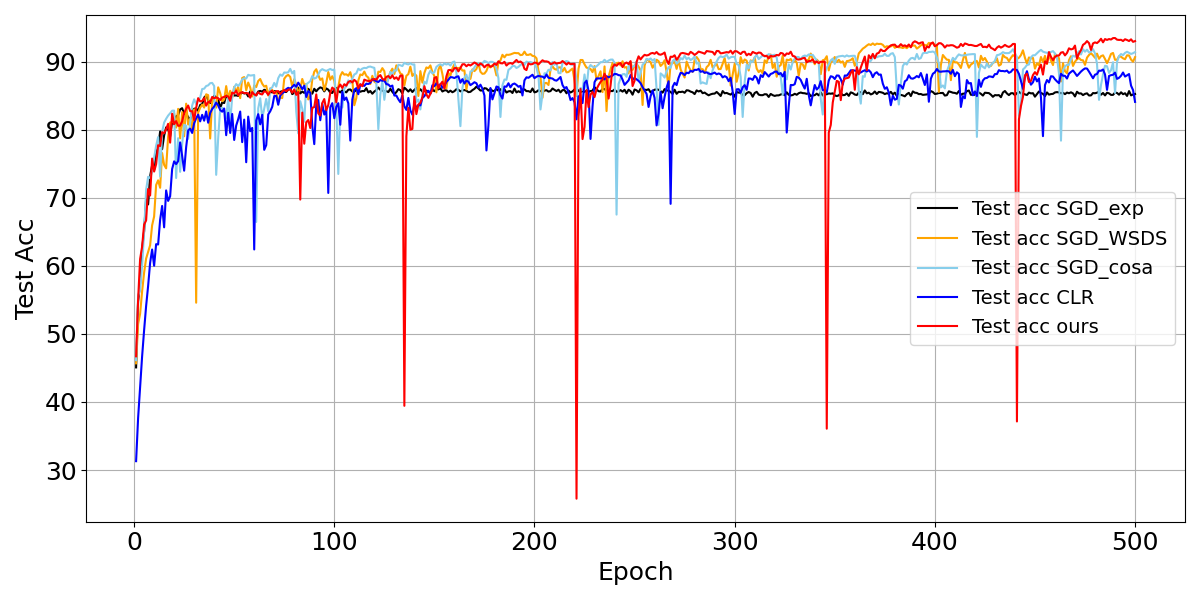}
    \caption{Test accuracy on CIFAR-10 dataset.}
    \label{fig:test_ac_resnet18_densenet121_500}
  \end{subfigure}
  \caption{\small (Left) Training loss trajectories for our proposed scheduler (ours\_exp, in \textcolor{red}{red}) alongside four baselines: SGD with exponential decay, SGD with Warmup-Stable-Decay-Simplified (SGD\_WSDS), SGD with Cyclical Learning Rate (SGD\_CLR), and SGD with Cosine Annealing and Warm Restarts (SGD\_cosA). (Right) Test accuracy on CIFAR-10 under a training budget of 500 epochs; our approach (in \textcolor{red}{red}) finds better local optima.}
  \label{fig:intro_resNet18_densenet121}
\end{figure}

\begin{table}[h]
  \centering
  \small
  \setlength{\tabcolsep}{4pt}
  \caption{Loss values (Test, Validation, Train) on CIFAR-10, CIFAR-100, and TinyImageNet with ResNet-18. Lower is better.}
  \label{tab:test-losses-tinyImageNet}
  \begin{tabular}{lrrrrrrrr}
    \toprule
    Dataset & SGD\_exp & SGD\_lin & Adam & CosA & CLR & WSDS & Ours\_exp & Ours\_lin \\
    \midrule
    \multicolumn{9}{c}{\textbf{Test Losses}} \\
    \midrule
    CIFAR-10    & 1.57E-03 & 1.51E-03 & 1.54E-03 & 1.22E-03 & 1.47E-03 & 1.19E-03 & \textbf{1.03E-03} & \textbf{1.03E-03} \\
    CIFAR-100    & 5.37E-03 & 4.96E-03 & 5.67E-03 & 4.86E-03 & 4.96E-03 & 4.70E-03 & \textbf{4.29E-03} & \textbf{4.29E-03} \\
    TinyImageNet & 2.65E-02 & 7.63E-03 & 9.08E-03 & 7.05E-03 & 7.73E-03 & 7.58E-03 & \textbf{7.39E-03} & \textbf{7.37E-03} \\
    \midrule
    \multicolumn{9}{c}{\textbf{Val Losses}} \\
    \midrule
    CIFAR-10     & 1.52E-03 & 1.40E-03 & 1.49E-03 & 1.22E-03 & 1.39E-03 & 1.22E-03 & \textbf{1.04E-03} & \textbf{1.04E-03} \\
    CIFAR-100    & 5.37E-03 & 4.98E-03 & 5.72E-03 & 4.86E-03 & 4.95E-03 & 4.71E-03 & \textbf{4.22E-03} & \textbf{4.22E-03} \\
    TinyImageNet & 2.64E-02 & 7.59E-03 & 9.02E-03 & 7.02E-03 & 7.71E-03 & 7.54E-03 & \textbf{7.38E-03} & \textbf{7.34E-03} \\
    \midrule
    \multicolumn{9}{c}{\textbf{Train Losses}} \\
    \midrule
    CIFAR-10  & 3.66E-05 & 1.19E-05 &	2.98E-05 &	1.00E-05 &	\textbf{7.34E-06} &	7.83E-06 & 8.04E-06 & 8.04E-06 \\
    CIFAR-100 & 1.06E-04 & 2.85E-05	& 1.75E-05 &	1.66E-05 &	\textbf{1.58E-05} &	1.59E-05 &	2.50E-05 &	2.50E-05 \\
    TinyImageNet & 3.19E-04 &	6.94E-05 &	2.71E-05 &	2.59E-05 &	\textbf{2.54E-05} & 6.67E-05 &	6.83E-05 &	3.99E-05 \\
    \bottomrule
  \end{tabular}
\end{table}

\begin{table}[h]
  \centering
  \small
  \setlength{\tabcolsep}{4pt}
  \caption{Loss metrics for CIFAR-100 after 2000 epochs with ResNet-18. Lower is better.}
  \label{tab:CIFAR100-test-losses-2000}
  \begin{tabular}{lrrrrrrrr}
    \toprule
    Loss type & SGD\_exp & SGD\_lin & Adam & CosA & CLR & WSDS & Ours\_exp & Ours\_lin \\
    \midrule
    Test loss & 5.36E-03 & 4.96E-03 & 6.05E-03 & 4.73E-03 & 4.86E-03 & 4.58E-03 & \textbf{4.30E-03} & \textbf{4.30E-03} \\
    Val loss & 5.34E-03 & 5.01E-03 & 6.14E-03 & 4.77E-03 & 4.93E-03 & 4.62E-03 & \textbf{4.29E-03} & \textbf{4.29E-03} \\
    Train loss & 1.02E-04 & 2.51E-05 & 2.22E-04 & 1.56E-05 & \textbf{1.29E-05} & 1.56E-05 & 2.50E-05 & 2.50E-05 \\
    \bottomrule
  \end{tabular}
\end{table}

\begin{table}[h]
  \centering
  \scriptsize
  \setlength{\tabcolsep}{3.5pt}
  \caption{Loss values (Test, Validation, Train) for CIFAR-10 and CIFAR-100 across architectures. Lower is better.}
  \label{tab:cifar-losses-merged}
  \begin{tabular}{lrrrrr rrrrr}
    \toprule
    & \multicolumn{5}{c}{CIFAR-10} & \multicolumn{5}{c}{CIFAR-100} \\
    \cmidrule(lr){2-6} \cmidrule(lr){7-11}
    Architecture & SGD\_exp & CosA & CLR & WSDS & Ours & SGD\_exp & CosA & CLR & WSDS & Ours \\
    \midrule
    \multicolumn{11}{c}{\textbf{Test Losses}} \\
    \midrule
    ResNet34    & 1.70E-03 & 1.37E-03 & 1.52E-03 & 1.19E-03 & \textbf{1.09E-03} & 5.77E-03 & 4.73E-03 & 5.19E-03 & 4.91E-03 & \textbf{4.49E-03} \\
    ResNet50    & 1.67E-03 & 1.34E-03 & 1.60E-03 & 1.33E-03 & \textbf{8.50E-04} & 6.26E-03 & 4.71E-03 & 5.29E-03 & 4.53E-03 & \textbf{3.97E-03} \\
    VGG16       & 1.78E-03 & 1.63E-03 & 1.73E-03 & 1.64E-03 & \textbf{1.58E-03} & 6.46E-03 & 6.12E-03 & 6.70E-03 & 6.58E-03 & \textbf{6.52E-03} \\
    DenseNet121 & 2.12E-03 & 1.33E-03 & 1.99E-03 & 1.27E-03 & \textbf{1.03E-03} & 6.86E-03 & 5.78E-03 & 6.67E-03 & 5.84E-03 & \textbf{3.91E-03} \\
    \midrule
    \multicolumn{11}{c}{\textbf{Validation Losses}} \\
    \midrule
    ResNet34    & 1.64E-03 & 1.33E-03 & 1.44E-03 & 1.22E-03 & \textbf{1.06E-03} & 5.88E-03 & 4.78E-03 & 5.16E-03 & 4.99E-03 & \textbf{4.57E-03} \\
    ResNet50    & 1.58E-03 & 1.26E-03 & 1.55E-03 & 1.22E-03 & \textbf{8.80E-04} & 6.29E-03 & 4.76E-03 & 5.42E-03 & 4.56E-03 & \textbf{3.91E-03} \\
    VGG16       & 1.68E-03 & 1.54E-03 & 1.63E-03 & 1.59E-03 & \textbf{1.56E-03} & 6.50E-03 & 6.05E-03 & 6.65E-03 & 6.56E-03 & \textbf{6.51E-03} \\
    DenseNet121 & 2.00E-03 & 1.25E-03 & 1.84E-03 & 1.21E-03 & \textbf{1.07E-03} & 7.00E-03 & 5.76E-03 & 6.68E-03 & 5.76E-03 & \textbf{3.95E-03} \\
    \midrule
    \multicolumn{11}{c}{\textbf{Train Losses}} \\
    \midrule
    ResNet34    & 2.24E-05 & 1.23E-05 & \textbf{8.51E-06} & 1.01E-05 & 8.80E-06 & 5.30E-05 & 1.37E-05 & \textbf{1.06E-05} & 1.56E-05 & 1.83E-05 \\
    ResNet50    & 2.35E-05 & 1.57E-05 & \textbf{1.13E-05} & 1.27E-05 & 1.42E-05 & 5.60E-05 & 1.48E-05 & \textbf{1.16E-05} & 1.61E-05 & 2.02E-05 \\
    VGG16       & 3.42E-05 & 1.66E-05 & 1.80E-05 & 1.73E-05 & \textbf{1.65E-05} & 5.61E-05 & 3.11E-05 & \textbf{1.88E-05} & 2.69E-05 & 2.45E-05 \\
    DenseNet121 & 7.55E-05 & 1.33E-05 & 1.07E-05 & \textbf{1.01E-05} & 1.16E-05 & 7.21E-04 & 2.10E-05 & 1.99E-05 & 2.00E-05 & \textbf{1.80E-05} \\
    \bottomrule
  \end{tabular}
\end{table}

\begin{figure}[h]
  \centering
  \begin{subfigure}[t]{0.48\linewidth}
    \centering
    \includegraphics[width=\linewidth]{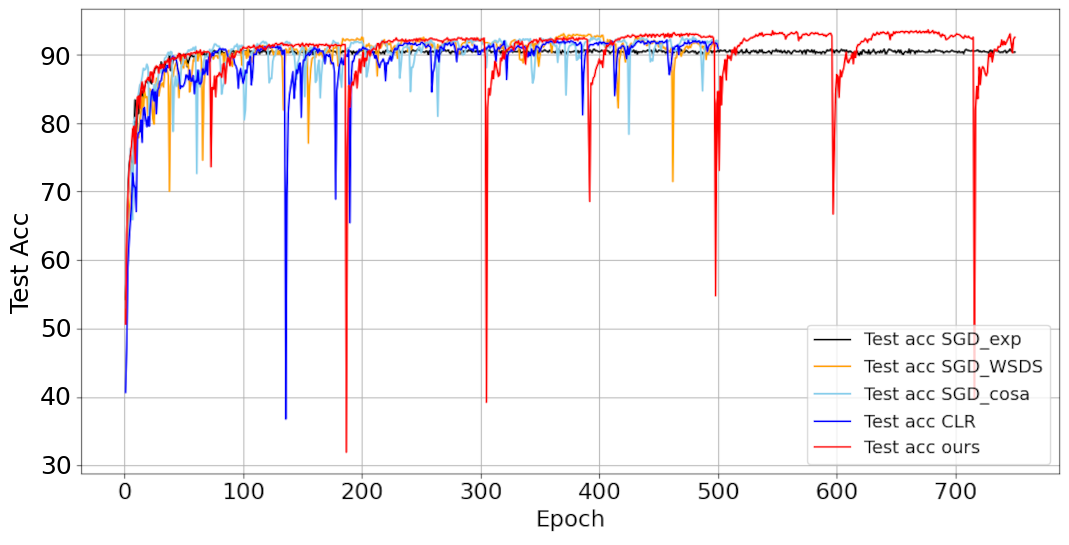}
    \caption{Test accuracy with ResNet18 on CIFAR-10.}
    \label{fig:resnet18_test_extended_cifar10}
  \end{subfigure}\hfill
  \begin{subfigure}[t]{0.48\linewidth}
    \centering
    \includegraphics[width=\linewidth]{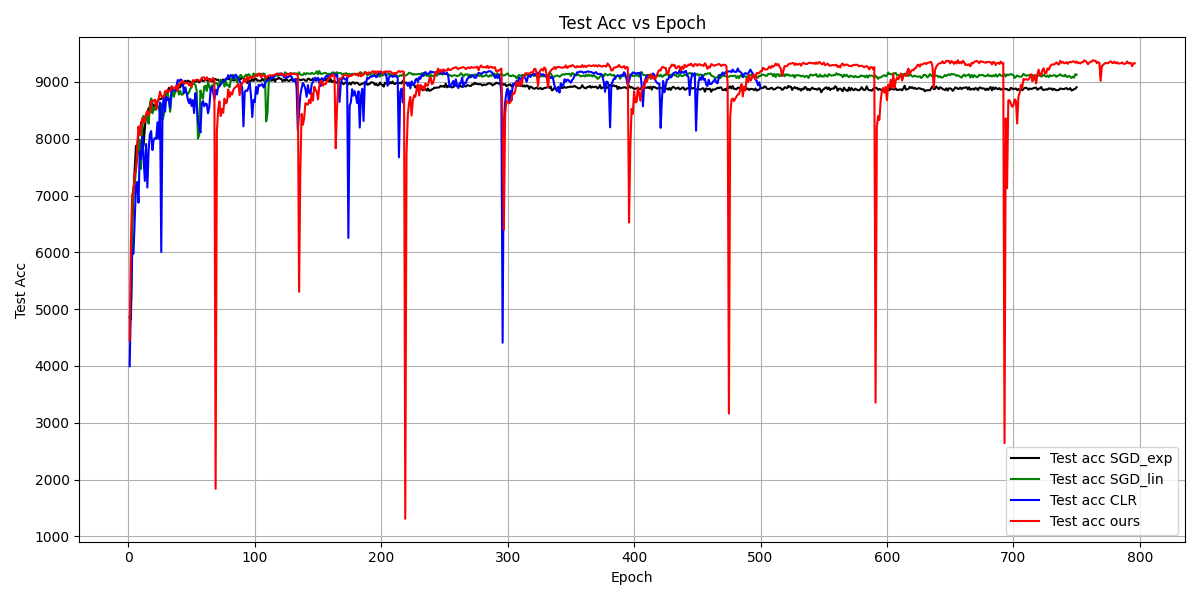}
    \caption{Test accuracy with ResNet34 on CIFAR-10.}
    \label{fig:resnet34_cifar10_test_extended}
  \end{subfigure}
  \begin{subfigure}[t]{0.48\linewidth}
    \centering
    \includegraphics[width=\linewidth]{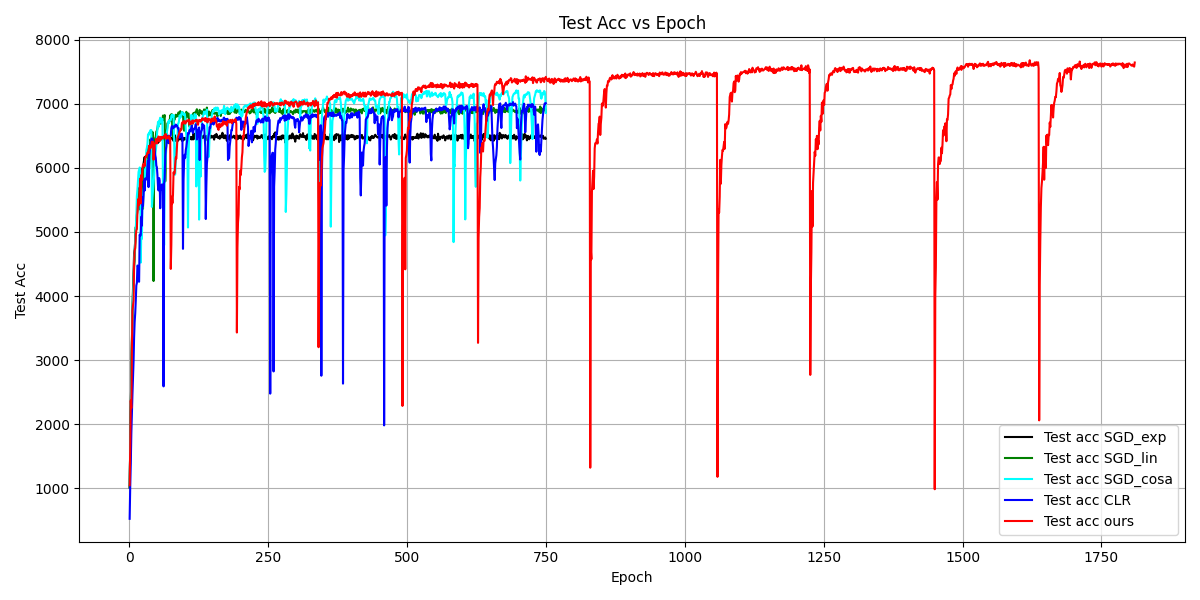}
    \caption{Test accuracy with ResNet50 on CIFAR-100.}
    \label{fig:resnet50_cifar100_test_extended}
  \end{subfigure}
  \begin{subfigure}[t]{0.48\linewidth}
    \centering
    \includegraphics[width=\linewidth]{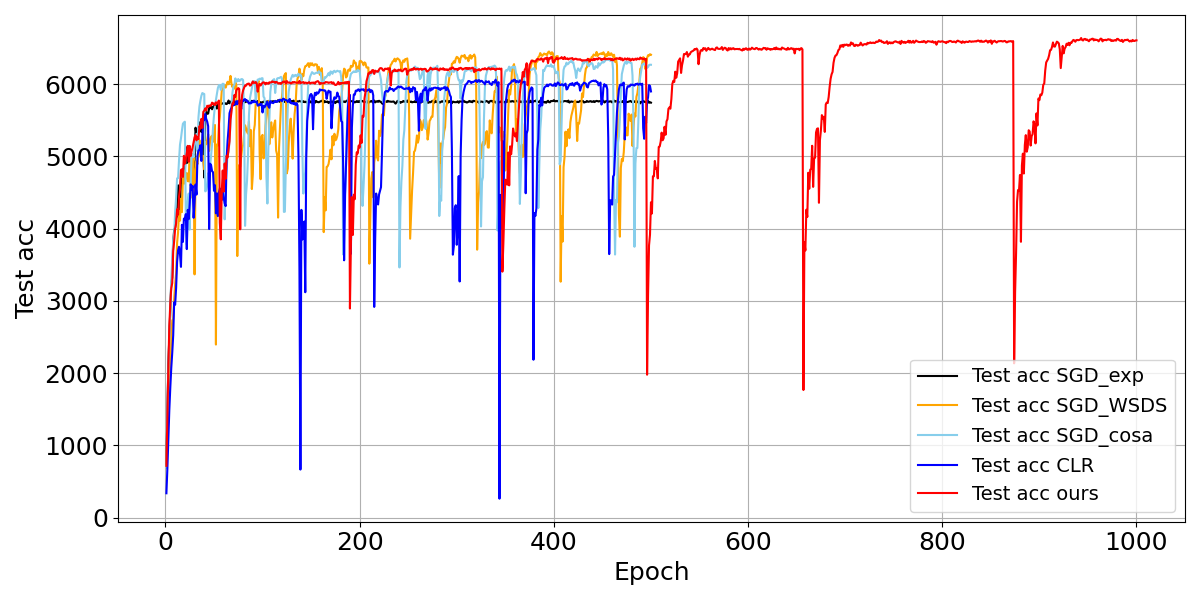}
    \caption{Test accuracy with ResNet50 on TinyImageNet.}
    \label{fig:resnet50_test_extended_tinyimagenet}
  \end{subfigure}\hfill
  
  \caption{\small Training loss trajectories for our proposed scheduler (ours\_exp, in \textcolor{red}{red}) compared against four baselines: standard SGD with exponential decay, SGD with Warmup‑Stable‑Decay‑Simplified (SGD\_WSDS), SGD with Cyclical Learning Rate (SGD\_CLR), and SGD with Cosine Annealing and Warm Restarts (SGD\_cosA). While the baseline methods largely converge and stop improving with additional training, SGD‑ER (ours, in \textcolor{red}{red}) continues to make steady progress, ultimately reaching higher test accuracies and demonstrating its advantage in prolonged‑training regimes.}
  \label{fig:extended_test_accuracies_some}
\end{figure}

\begin{table}[h]
  \centering
  \scriptsize
  \setlength{\tabcolsep}{3.5pt}
  \caption{Loss values (Test, Val, and Train) on TinyImageNet across ResNet architectures. Lower is better.}
  \label{tab:tinyimg-test-losses}
  \begin{tabular}{lrrrrr rrrrr}
    \toprule
    & \multicolumn{5}{c}{ResNet-34} & \multicolumn{5}{c}{ResNet-50} \\
    \cmidrule(lr){2-6} \cmidrule(lr){7-11}
    Losses & SGD\_exp & CosA & CLR & WSDS & Ours & SGD\_exp & CosA & CLR & WSDS & Ours \\
    \midrule
    Test loss & 8.35E-03 & \textbf{7.25E-03} & 7.35E-03 & 7.21E-03 & 2.57E-02 & 8.80E-03 & 6.62E-03 & 7.16E-03 & 6.54E-03 & \textbf{6.39E-03} \\
    Val loss & 8.32E-03 & \textbf{7.16E-03} & 7.27E-03 & 7.17E-03 & 2.58E-02 & 8.80E-03 & 6.62E-03 & 7.08E-03 & 6.53E-03 & \textbf{6.35E-03} \\
    Train Loss & 7.80E-05 & 1.95E-05 & \textbf{1.81E-05} & 2.10E-05 & 1.80E-04 & 6.63E-05 & 2.56E-05 & \textbf{2.15E-05} & 2.58E-05 & 4.06E-05 \\
    \bottomrule
  \end{tabular}
\end{table}

\section{Theoretical Analysis} \label{theo_analysis}

\begin{theorem}
Let $f:\mathbb{R}^d \to \mathbb{R}$ be an $L$-smooth function. Let $\theta^\star$ be a strict saddle point where $\nabla f(\theta^\star) = 0$ and $\lambda_{\min}(\nabla^2 f(\theta^\star)) = -\gamma < 0$. Let $\theta_0^{(k)}$ be the starting point for restart $k$. Assume there exists a non-zero residual offset $x_0 = \langle \theta_0^{(k)} - \theta^\star, v_- \rangle \neq 0$ projected onto the unstable eigenvector $v_-$. 

During restart $k$, the SGD-ER update is:
\[ \theta_{t+1} = \theta_t - \eta_k \nabla f(\theta_t), \quad \eta_k = (k+1)\eta_0. \]
Then, for any neighborhood radius $\delta > |x_0|$, the number of iterations $T_k$ required to escape the neighborhood $\mathcal{N}_\delta = \{ \theta : \|\theta - \theta^\star\| \le \delta \}$ satisfies:
\[ T_k \ge \frac{\ln(\delta / |x_0|)}{\ln(1 + \eta_k \gamma)}, \]
and $T_k \to 0$ as $k \to \infty$.
\end{theorem}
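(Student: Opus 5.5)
The plan is to reduce the dynamics inside $\mathcal{N}_\delta$ to a one-dimensional linear recursion along $v_-$ and then read off the escape time. Write $u_t = \theta_t - \theta^\star$ and expand the gradient around the saddle. Since $\nabla f(\theta^\star)=0$, we have $\nabla f(\theta_t) = H u_t + r_t$, where $H=\nabla^2 f(\theta^\star)$ and $r_t$ is a Taylor remainder. The update becomes
\[ u_{t+1} = (I - \eta_k H)u_t - \eta_k r_t . \]
I will first argue in the quadratic model $r_t \equiv 0$, which is the regime the statement implicitly describes, and treat the remainder at the end. Diagonalize $H$ in an orthonormal eigenbasis containing $v_-$, with $Hv_- = -\gamma v_-$. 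Let $x_t=\langle u_t, v_-\rangle$. The $v_-$ coordinate then evolves exactly as $x_{t+1} = (1+\eta_k\gamma)x_t$, so
\[ |x_t| = (1+\eta_k\gamma)^t |x_0| . \]

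Next I relate escape from $\mathcal{N}_\delta$ to this coordinate. A lower bound on $T_k$ needs an upper bound on $\|u_t\|$, not just the lower bound $\|u_t\|\ge |x_t|$. I will therefore add two assumptions. First, the step size lies in the stable regime $\eta_k L \le 2$, so every eigendirection with $\lambda_i \ge 0$ has $|1-\eta_k\lambda_i|\le 1$. Second, every other negative eigenvalue is $\ge -\gamma$, which holds automatically because $-\gamma$ is the minimum. Together these give $\|u_t\|\le (1+\eta_k\gamma)^t\|u_0\|$.

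For the bound exactly as stated, I take the residual at restart to lie along the unstable direction, $\|u_0\| = |x_0|$, which is the situation the theorem's hypotheses single out. Escape at time $T_k$ means $\delta < \|u_{T_k}\| \le (1+\eta_k\gamma)^{T_k}|x_0|$. Taking logarithms and using $\delta>|x_0|$ gives $T_k \ge \ln(\delta/|x_0|)/\ln(1+\eta_k\gamma)$. In the pure unstable case the escape time is in fact the ceiling of this quantity, so the bound is tight up to rounding.

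For the limit, substitute $\eta_k=(k+1)\eta_0$. Then $\ln(1+(k+1)\eta_0\gamma)\to\infty$, so the right-hand side decays like $\ln(\delta/|x_0|)/\ln k \to 0$. Since $T_k$ is an integer that is at least $1$, I will state this conclusion precisely: the escape-time bound, and hence the number of iterations needed beyond the first, vanishes. Literally $T_k\to 0$ is not possible, and I will phrase the conclusion accordingly.

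The main obstacle is the non-quadratic remainder, together with the tension between $\eta_k\to\infty$ and the stability assumption $\eta_k L\le 2$. For the remainder, I would assume $\nabla^2 f$ is $\rho$-Lipschitz, which gives $\|r_t\|\le \tfrac{\rho}{2}\|u_t\|^2\le \tfrac{\rho\delta}{2}\|u_t\|$ inside $\mathcal{N}_\delta$. The growth factor then becomes $1+\eta_k(\gamma+\rho\delta/2)$, and the same logarithmic argument yields the bound with $\gamma$ replaced by $\gamma+\rho\delta/2$. This recovers the stated form as $\delta\to 0$. For the stability tension, I would either restrict the limit statement to $k$ with $(k+1)\eta_0 L\le 2$, or replace the upper bound by $\max_i|1-\eta_k\lambda_i|^{t}$ and note that escape only becomes faster. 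Either way the qualitative claim, that the escape time shrinks as the learning rate escalates, survives.
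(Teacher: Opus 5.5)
Your argument is correct for what it claims, and its core matches the paper's. You linearize at $\theta^\star$, project onto $v_-$, obtain $x_t=(1+\eta_k\gamma)^t x_0$, and take logarithms. Where you genuinely diverge is the escape criterion. The paper declares escape at the first $t$ with $|x_t|\ge\delta$. In the linearized model the bound then follows from the scalar recursion alone, with no condition on $\eta_k$ or on the components of $u_0$ orthogonal to $v_-$. But that is a statement about the $v_-$ coordinate, not about leaving $\mathcal{N}_\delta$. You keep the ball and correctly note that a lower bound on the exit time needs an upper bound on $\|u_t\|$. You pay for it with $\eta_k L\le 2$ and $\|u_0\|=|x_0|$. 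Some restriction of this kind is unavoidable under the ball reading. A component of $u_0$ orthogonal to $v_-$ can put $\theta_0^{(k)}$ arbitrarily close to (or outside) the boundary of $\mathcal{N}_\delta$ while $|x_0|$ stays tiny, and exit then happens well before the stated bound. Your Hessian-Lipschitz treatment of the remainder is also more careful than the paper's bare $\approx$, since $L$-smoothness alone does not give an $\mathcal{O}(\|u_t\|^2)$ error. You are also right that $T_k\to 0$ cannot hold: the paper's limit is the limit of the lower bound, not of $T_k$.

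Two refinements. First, in the quadratic model with $u_0=x_0 v_-$ the iterates never leave $\mathrm{span}(v_-)$. So $\|u_t\|=(1+\eta_k\gamma)^t|x_0|$ exactly, and $\eta_k L\le 2$ is superfluous. The conflict with $\eta_k\to\infty$ only appears once the remainder or a misaligned $u_0$ excites other directions.

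Second, your fallback of replacing the growth factor by $\max_i|1-\eta_k\lambda_i|$ does not show that escape becomes faster. It is again a lower bound on $T_k$, and a shrinking lower bound says nothing about $T_k$ itself. The limit claim needs an upper bound, and so does the paper's closing remark about escaping within any budget $T_{max}$. That bound comes from the inequality you set aside. In the quadratic model $x_t$ evolves exactly, whatever the other coordinates do, so $\|u_t\|\ge|x_t|$ gives $T_k\le\lfloor\ln(\delta/|x_0|)/\ln(1+\eta_k\gamma)\rfloor+1$ for every $k$, with no stability assumption. Hence $T_k=1$ for all large $k$ whenever $\theta_0^{(k)}\in\mathcal{N}_\delta$. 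This treats $x_0$ as independent of $k$, as both you and the paper implicitly do.
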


\textbf{Proof:  }

Let $u_t = \theta_t - \theta^\star$ be the displacement from the saddle point. Since $f$ is $L$-smooth and $\nabla f(\theta^\star) = 0$, we apply a first-order Taylor expansion to the gradient:
\[ \nabla f(\theta_t) = \nabla^2 f(\theta^\star)u_t + \mathcal{O}(\|u_t\|^2). \]
Substituting this into the update rule:
\[ u_{t+1} = u_t - \eta_k (\nabla^2 f(\theta^\star)u_t + \mathcal{O}(\|u_t\|^2)) \approx (I - \eta_k H)u_t, \]
where $H = \nabla^2 f(\theta^\star)$. Let $v_-$ be the unit eigenvector of $H$ corresponding to the eigenvalue $-\gamma$. We project the displacement onto this direction: $x_t = \langle u_t, v_- \rangle$. The projected dynamics follow:
\[ x_{t+1} = \langle (I - \eta_k H)u_t, v_- \rangle = x_t - \eta_k (-\gamma x_t) = (1 + \eta_k \gamma)x_t. \]

Solving the linear recurrence for $t$ iterations within restart $k$:
\[ x_t = (1 + \eta_k \gamma)^t x_0. \]
Since $\eta_k > 0$ and $\gamma > 0$, the growth factor $\alpha_k = (1 + \eta_k \gamma)$ is strictly greater than 1. This implies that any non-zero initial offset $x_0$, regardless of how small, will be amplified exponentially over time.

Escape is defined as the first time step $T_k$ such that $|x_{T_k}| \ge \delta$. Substituting the growth equation:
\[ (1 + \eta_k \gamma)^{T_k} |x_0| \ge \delta. \]
Taking the natural logarithm on both sides:
\[ T_k \ln(1 + \eta_k \gamma) \ge \ln\left( \frac{\delta}{|x_0|} \right) \implies T_k \ge \frac{\ln(\delta / |x_0|)}{\ln(1 + \eta_k \gamma)}. \]

As the restart index $k$ increases, the learning rate $\eta_k = (k+1)\eta_0$ increases linearly. Because the denominator $\ln(1 + \eta_k \gamma)$ is strictly increasing with $\eta_k$, the required escape time $T_k$ decreases monotonically:
\[ \lim_{k \to \infty} T_k = \lim_{\eta_k \to \infty} \frac{\ln(\delta / |x_0|)}{\ln(1 + \eta_k \gamma)} = 0. \]
This proves that the escalation of the learning rate ensures that the optimizer will eventually escape the saddle point within any predefined iteration budget $T_{max}$, provided $k$ is sufficiently large. \hfill \qedsymbol

\end{document}